\def\eqref#1{equation~\ref{#1}}
\def\1{\bm{1}}
\def\vc{{\bm{c}}}
\def\vs{{\bm{s}}}
\def\vv{{\bm{v}}}
\def\vx{{\bm{x}}}
\def\mR{{\bm{R}}}
\DeclareMathAlphabet{\mathsfit}{\encodingdefault}{\sfdefault}{m}{sl}
\SetMathAlphabet{\mathsfit}{bold}{\encodingdefault}{\sfdefault}{bx}{n}
\def\gA{{\mathcal{A}}}
\def\gB{{\mathcal{B}}}
\def\gC{{\mathcal{C}}}
\def\gJ{{\mathcal{J}}}
\def\gK{{\mathcal{K}}}
\def\gQ{{\mathcal{Q}}}
\def\gS{{\mathcal{S}}}
\def\gU{{\mathcal{U}}}
\def\gV{{\mathcal{V}}}
\def\gW{{\mathcal{W}}}
\def\gX{{\mathcal{X}}}
\def\gZ{{\mathcal{Z}}}
\title[Breaking the Symmetry]{Breaking the Symmetry: Resolving Symmetry Ambiguities in Equivariant Neural Networks}
  \author{\Name{Sidhika Balachandar} \Email{sidhikab@cs.cornell.edu}\\
  \addr Cornell University, Ithaca, NY, USA
  \AND
  \Name{Adrien Poulenard} \Email{padrien@stanford.edu}\\
  \Name{Congyue Deng} \Email{congyue@stanford.edu}\\
  \Name{Leonidas Guibas} \Email{guibas@cs.stanford.edu}\\
  \addr Stanford University, Stanford, CA, USA
 }
\begin{document}

\maketitle

\begin{abstract}
Equivariant networks have been adopted in many 3-D learning areas. Here we identify a fundamental limitation of these networks: their ambiguity to symmetries. Equivariant networks cannot complete symmetry-dependent tasks like segmenting a left-right symmetric object into its left and right sides. We tackle this problem by adding components that resolve symmetry ambiguities while preserving rotational equivariance. We present OAVNN: Orientation Aware Vector Neuron Network, an extension of the Vector Neuron Network \citep{VNN}. OAVNN is a rotation equivariant network that is robust to planar symmetric inputs. Our network consists of three key components. 1) We introduce an algorithm to calculate symmetry detecting features. 2) We create a symmetry-sensitive orientation aware linear layer. 3) We construct an attention mechanism that relates directional information across points. We evaluate the network using left-right segmentation and find that the network quickly obtains accurate segmentations. We hope this work motivates investigations on the expressivity of equivariant networks on symmetric objects.

\end{abstract}
\begin{keywords}
$\mathrm{O}(3)$ equivariance, planar symmetry, 3-D learning, point cloud analysis
\end{keywords}

\section{Introduction}
\label{sec:intro}

3-D representations of real-life objects are used for problems in computer vision, robotics, medicine, augmented reality, and virtual reality. Geometric deep learning leverages the geometric properties of 3-D structures to build robust and data-efficient models for tasks in these fields. Many networks have been proposed for 3-D geometric learning and point cloud analysis, as seen in \citep{survey}. For tasks on 3-D data, the input has no prevailing pose, and the network must perform well regardless of the input pose. These tasks lend themselves to invariance and equivariance. Invariant models produce the same output regardless of the input pose. Equivariant models produce outputs that are transformed in the same way as the input. Thus these tasks motivate rotation equivariant and invariant networks that share information across all rotated poses. These networks have shown better performance and data efficiency for unaligned data on tasks such as segmentation and reconstruction in \citep{VNN}; shape retrieval and scene classification in \citep{multiview}; and 3-D model recognition and atomization energy regression in \citep{sphericalCNN}. 

Nevertheless, as shown in Figure \ref{fig:teaser}, many of these networks have issues with symmetry. Equivariant networks are equivariant to symmetries. Therefore these networks cannot complete tasks that depend on the input symmetry. Due to the prevalence of symmetric inputs and symmetry dependent tasks, there is a need for an equivariant network that handles symmetries. We introduce OAVNN: Orientation Aware Vector Neuron Network, Figure \ref{fig:model}, an extension of the Vector Neuron Network (VNN) \citep{VNN}. The OAVNN is a rotation equivariant network that is robust to planar symmetric inputs. The code for the model is available at \url{https://github.com/sidhikabalachandar/oavnn}.

\begin{figure}[]
    \centering
    \includegraphics[width=2.6in]{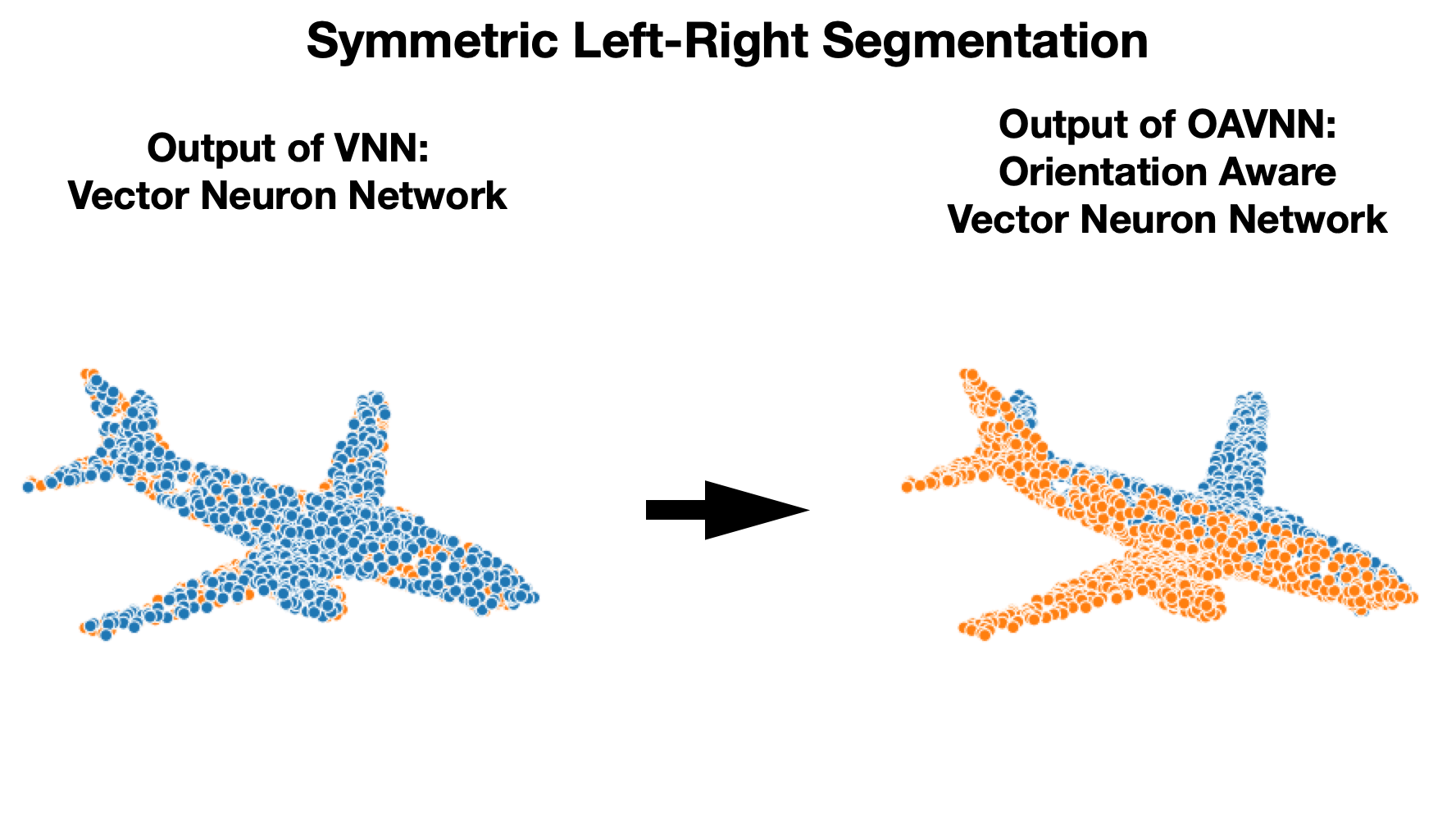}
    \caption{The VNN has ambiguities to symmetries and cannot complete symmetry-dependent tasks. We created the OAVNN that is rotation equivariant and robust to planar symmetries. Consider the symmetry-dependent task of left-right segmentation of left-right symmetric objects. The VNN cannot complete this task, but the OAVNN learns the correct segmentation. 
}
    \label{fig:teaser}
\end{figure}

\section{Related Work}

Many equivariant designs are summarized in \citep{eq_survey}. One category uses spherical harmonics, such as \citep{TFN} and \citep{3DSCNN}. The networks in \citep{GECNN} and \citep{multiview} are similar but instead are equivariant to the symmetry group of an icosahedron. A simpler formulation is the Vector Neuron Network \citep{VNN} which is based on interpretable 3-D vectors. Here, we focus on simple, interpretable networks like the VNN which has symmetry issues that we intend to study and improve.

A related computer graphics problem is symmetry detection, as seen in \citep{PRSNET} and \citep{SymmetryNet}. In this project, we not only study symmetry detection but also resolve expressivity issues of equivariant networks related to symmetries.

\section{A Fundamental Problem in Equivariant Networks}
\subsection{Definitions}

We consider the group 
$\mathrm{O}(3)$ of all rotation and reflection matrices. A function 
$f$ is 
$\mathrm{O}(3)$ \textbf{equivariant} if for all 
$\mR\in \mathrm{O}(3)$ and 
$\vx\in\mathbb{R}^3$, 
$f(\mR\cdot \vx) = \mR\cdot f(\vx)$, and 
$\mathrm{O}(3)$ \textbf{invariant} if 
$f(\mR\cdot \vx) = f(\vx)$. An 
$\mathrm{O}(3)$ equivariant function can be made invariant by taking the norm.

\begin{figure}[]
    \centering
    \includegraphics[width=2.65in]{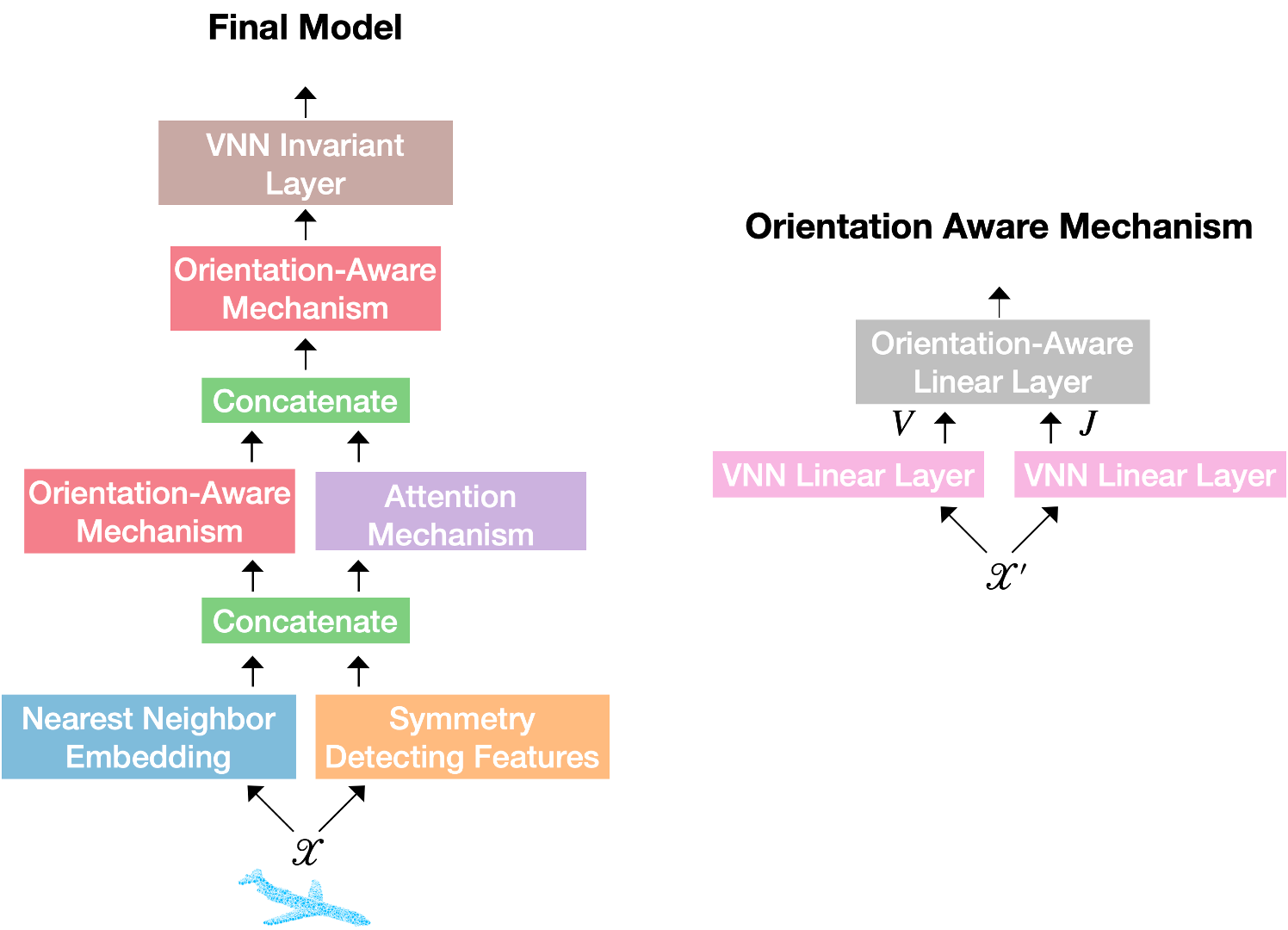}
    \caption{\textbf{Framework} --- The OAVNN takes input $\gX$ and passes it through the nearest neighbor embedding defined in Section \ref{nne} to create local patches. Additionally, as defined in Section \ref{symfeat} the symmetry detecting features are extracted. These features are passed through the attention mechanism defined in Section \ref{attn} and the orientation aware mechanism (right), which uses the orientation aware linear layer defined in Section \ref{complin}. The outputs are passed through another orientation aware mechanism. For tasks like segmentation, a final invariant layer is applied.
}
    \label{fig:model}
\end{figure}

\subsection{Ambiguity under Self-Symmetries} \label{theory}

Equivariant networks display problems in encoding objects with self-symmetries. Consider object $\gX\in\mathbb{R}^{C\times3}$ with self-symmetry $\mR_0$. When encoded by an equivariant $f$, due to the self-symmetry, $f(\mR_0\gX) = f(\gX).$ Additionally, since $f$ is equivariant, $f(\mR_0\gX) = \mR_0f(\gX).$ Thus, $f(\gX)=\mR_0f(\gX)$. Let's call $f(\gX)_{\perp}$ the component of $f(\gX)$ in the \textbf{direction of symmetry} or the component orthogonal to $\mR_0$'s plane of symmetry. Then $f(\gX)_{\perp}=\mR_0f(\gX)_{\perp}=-f(\gX)_{\perp}$. Thus we see that $f(\gX)_{\perp}$ must be zero. \textit{Therefore for a planar symmetric input, an $\mathrm{O}(3)$ equivariant function cannot predict information in the direction of symmetry}. This means an equivariant network will have trouble solving tasks that distinguish between two symmetric parts, such as symmetric segmentation. 

\subsection{Vector Neuron Network: An Example} \label{nne}
Here we focus on a specific equivariant network, the VNN \citep{VNN}. The VNN's linear layer is structured as 
$f_{lin}(\gX)=\gW\gX$, where 
$\gX\in\mathbb{R}^{C\times3}$ is a list of input vectors and 
$\gW\in\mathbb{R}^{C'\times C}$ is a learnable weight matrix. The VNN maintains 
$\mathrm{O}(3)$ equivariance since it treats each vector input as an independent unit, and a matrix 
$\mR\in \mathrm{O}(3)$ commutes with this linear layer as 
$f_{lin}(\gX\cdot \mR)=\gW\gX\cdot \mR=f_{lin}(\gX)\cdot \mR$. The VNN also modifies ReLU, pooling, and batch normalization layers. All modified layers operate on vectors and preserve equivariance. The VNN also proposes an invariant layer.

The VNN has two drawbacks. First, since it is $\mathrm{O}(3)$ equivariant, it cannot learn the direction of symmetry. Second, the VNN can only share information between local patches of points. The VNN creates local patches through a nearest neighbor embedding by concatenating each point with its $k$ nearest neighbors. For symmetry-dependent tasks, the network must also globally transfer information. Consider a left-right symmetric object. The left-right direction of symmetry is uniquely determined by the top-bottom and front-back directions. When learning symmetries, these directions must be globally transferred. Since the VNN cannot globally propagate information it has difficulty learning symmetries.

\section{Methods}
We introduce an orientation aware framework (Figure \ref{fig:model}) to resolve the limitations of the VNN for planar symmetric inputs. We modify the VNN by adding three components: symmetry-detecting features defined in Section \ref{symfeat}, an orientation aware linear layer defined in Section \ref{complin}, and an attention mechanism defined in Section \ref{attn}. Finally, we apply the VNN invariant layer for tasks such as segmentation.

\begin{figure}[]
    \centering
    \includegraphics[width=2.65in]{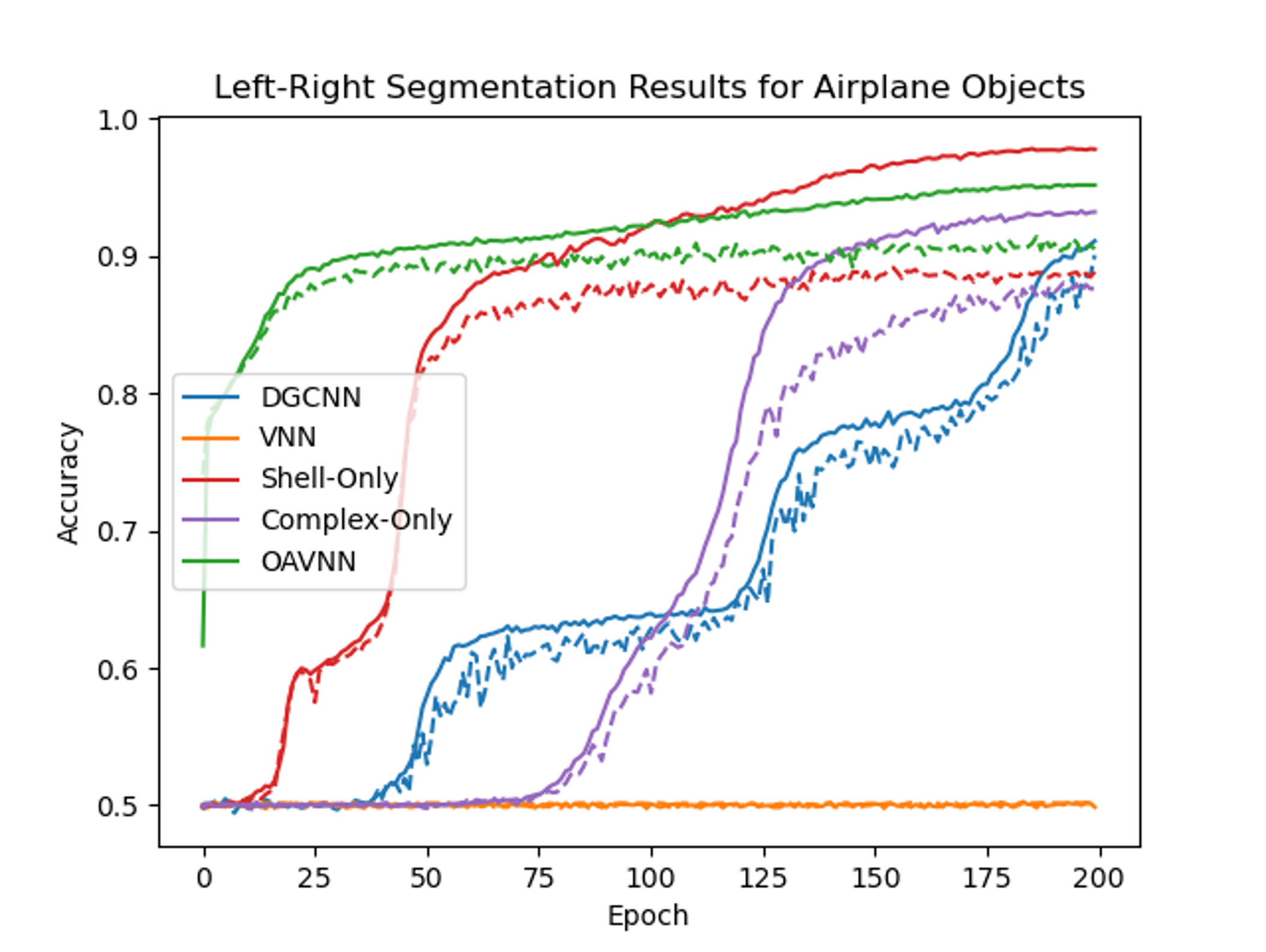}
    \caption{Results for the left-right segmentation experiment described in Section \ref{results} for airplane objects. The plot shows training (solid) and testing accuracies (dashed) for 1) DGCNN (not equivariant), 2) VNN (equivariant and ambiguous to symmetries), 3) Shell-Only (symmetry-detecting features and attention mechanism), 4) Complex-Only (orientation aware linear layer and attention mechanism), and 5) OAVNN (symmetry-detecting features, orientation aware linear layer, and attention mechanism).}
    \label{fig:ablation}
\end{figure}

\section{Results} \label{results}
We test five models using a left-right segmentation experiment run on symmetric airplane objects from the Shapenet dataset \citep{Shapenet}. Descriptions of the models are in Section \ref{ablation} and the results are shown in Figure \ref{fig:ablation}. Since the segmentation is symmetric, the model must detect the direction of symmetry. The VNN is ambiguous to symmetries and therefore fails to segment the airplane. Any network sensitive to the direction of symmetry can complete the task. However, the DGCNN model \citep{DGCNN}, a conventional segmentation network which is not equivariant, takes a long time to learn. When we add symmetry-detecting features to create the Shell-Only network, we can complete the task faster. The OAVNN model, which combines both symmetry-detecting features and the orientation aware linear layer, can obtain an accurate segmentation faster than any other model. Therefore, as more symmetry-detecting features are added, the model completes the task faster. A more extensive ablation study is provided in Appendix Section \ref{ablation}. Experimental results on other object classes (caps and chairs) are also provided in Appendix Section \ref{ablation}.

\section{Conclusions}
Overall we have shown that equivariant networks are equivariant to symmetries, so their outputs are ambiguous to symmetries. Thus these networks have trouble completing symmetry-dependent tasks. We propose the OAVNN model, which resolves symmetry ambiguities for planar symmetric objects while preserving rotational equivariance. In the future, we hope to address more complex symmetries and symmetry-dependent tasks. Overall, we hope this work motivates investigations into the symmetry ambiguities of equivariant networks.

\acks{We thank the Stanford Undergraduate Research Internship in Computer Science (CURIS) program for partly funding this work.}
\newpage
\bibliography{pmlr-sample}

\appendix

\section{Methods}
Here we provide more detailed explanations of the three orientation aware components of the OAVNN.

\subsection{Symmetry Detecting Hand-Crafted Features} \label{symfeat}
We propose hand-crafted features that detect the direction of symmetry. We calculate these features by using the planar symmetry detection algorithm. The algorithm is described in Figure \ref{fig:shell}, and the pseudocode is in Algorithm \ref{alg:shell}. It takes in two inputs: the point cloud $\gX\in\mathbb{R}^{C\times3}$ and the number of shells $n$. We let $\gX_i$ represents the $i$th point in the point cloud where $i\in\{0,\ldots,C-1\}$. For each point, we split the point cloud into $n$ distance-based shells. We let $\gS_{i,j}$ represent each distance based shells where $j\in\{0,\ldots,n-1\}$. For each shell, we calculate the shell vector from the starting point $\gX_i$ to the centroid of the shell. We let $\vv_{i,j}$ represent the shell vector. We then calculate the $\frac{n(n-1)}{2}$ directed cross-products $\vc_{i,j,k}$ between nearer and further shell vectors. $\vc_{i}$ represents the cross vector for a point $i$ (average of all directed cross products). For some geometric intuition, the shell vectors increase in length and rotate, and the cross vector represents the axis of rotation. The algorithm returns the average cross vector $\vc$ across all points. Symmetric points will have shell vectors rotating in opposite directions. Therefore many components of the cross vectors will cancel out, and for a symmetric input, the average cross vector lies along the direction of symmetry. A proof of this claim is in Appendix Section \ref{proof}. The symmetry detecting feature is the average cross vector which informs the network about the object's symmetries.  

\begin{figure}[]
    \centering
    \includegraphics[width=5in]{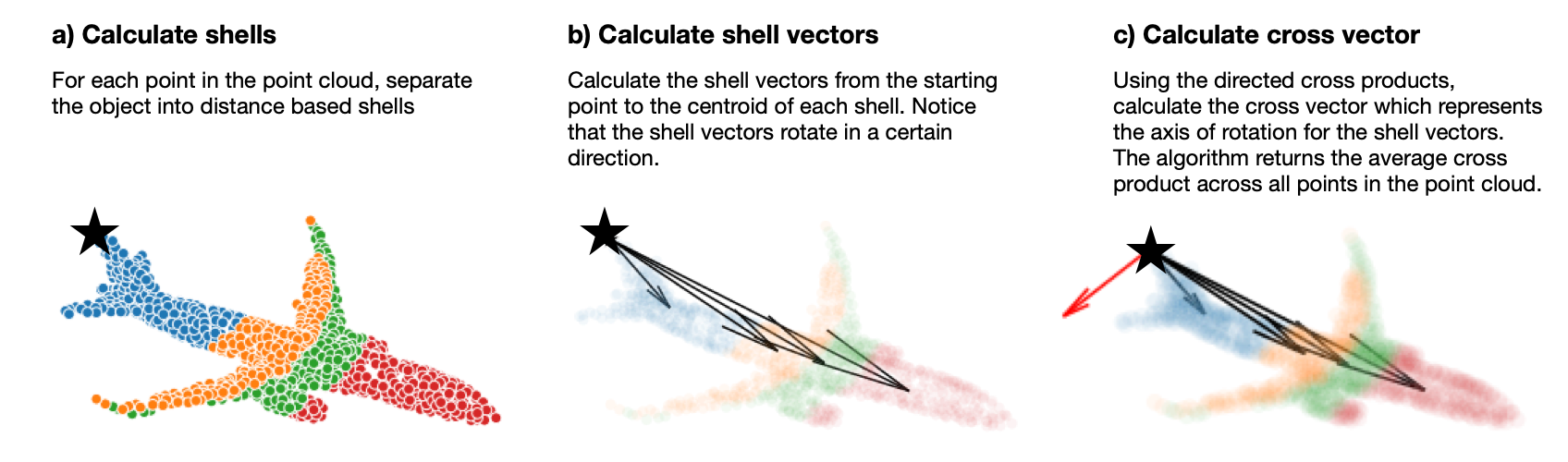}
    \caption{As shown in Algorithm \ref{alg:shell}, the planar symmetry detection algorithm identifies the direction of symmetry by calculating shell vectors (black) and cross vectors (red). It outputs the average cross vector across all points. The shell vectors for symmetric points, such as on the left and right wings, rotate in opposite directions. Therefore, many components cancel out, and in the end, the output lies in the direction of symmetry.}
    \label{fig:shell}
\end{figure}

\begin{algorithm2e}[H]
\caption{Pseduocode for the Planar Symmetry Detection Algorithm}\label{alg:shell}
\KwIn{$\gX\in\mathbb{R}^{C\times3},n$}
\KwOut{$\vc$, the direction of symmetry}
\For{$i\gets 0, C-1$} {
    \For{$j\gets 0, n-1$} {
        $\gS_{i,j}\gets j\frac{C}{n}\text{ to }(j+1)\frac{C}{n}\text{ nearest neighbors }$\\
        $\vv_{i,j}\gets \textbf{mean}(\gS_{i,j}-\gX_i)$
    }
    \For{$j\in[n]$} {
        \For{$k\in[j+1,n]$} {
            $\vc_{i,j,k}\gets \vv_{i,j}\times \vv_{i,k}$
        }
    }
    $\vc_i\gets\textbf{mean}_{j,k}(\vc_{i,j,k})$
}
$\vc\gets\textbf{mean}_{i}(\vc_{i})$
\end{algorithm2e}

\subsection{Orientation Aware Linear Layer} \label{complin}
We also encourage detection of the symmetry direction by creating an orientation aware \textbf{complex linear layer} shown in Figure \ref{fig:complex_linear_layer}. The complex linear layer uses orientation to distinguish the direction of symmetry. It takes in a list of input vectors $\gV\in\mathbb{R}^{N\times C\times3}$ and a list of direction vectors $\gJ\in\mathbb{R}^{N\times C\times3}$. Here $N$ represents the number of points in the point cloud, and $C$ represents the dimension of the channel that the linear layer acts on. The complex linear layer outputs a list of vectors in $\mathbb{R}^{N\times C'\times3}$. Thus it changes the dimensionality of the channel from $C$ to $C'$. The complex linear layer learns a rotation and dilation for each vector in $\gV$ in the direction of its corresponding vector in $\gJ$. It does this by defining two terms. First, it creates a list of $\mathbb{R}^3$ bases $\mR(\gJ)=(\gU_1, \gU_2, \gJ)\in\mathbb{R}^{N\times C\times3\times3}$ where the list of the last vectors in each basis is $\gJ$. Each basis is positively oriented, and therefore for each basis $b$ in $\mR(\gJ)$, $\det b=+1$. The complex linear layer rotates and dilates each input vector in $\gV$ in its corresponding basis in $\mR(\gJ)$ using a combined weight matrix

\begin{equation}
    \gZ(\gA,\gB,\gC)_{j,k,:,:}=\begin{bmatrix}
\gA_{j,k} & -\gB_{j,k} & 0\\
\gB_{j,k} & \gA_{j,k} & 0\\
0 & 0 & \gC_{j,k}
\end{bmatrix}
\end{equation}
where $\gA,\gB,\gC\in\mathbb{R}^{C'\times C}$ are learnable weight matrices and $j\in\{0,1,\ldots,C'-1\}$ and $k\in\{0,1,\ldots,C-1\}$. The complex linear layer outputs a list of vectors in $\mathbb{R}^{N\times C'\times3}$ using the following formulation

\begin{equation}
    f_{complex\_lin}(\gV, \gJ)_{i,j,:}=\sum_{k=0}^{C-1} \mR(\gJ)_{i,k,:,:}\gZ(\gA,\gB,\gC)_{j,k,:,:}\mR(\gJ)_{i,k,:,:}^{\top}\gV_{i,k,:}
\end{equation}
where $j$ and $k$ are defined as before and $i\in\{0,1,\ldots,N-1\}$. Thus $i$ iterates over the point dimension, $k$ iterates over the original number of channels, and $j$ iterates over the new number of channels.

 The complex linear layer learns the direction of symmetry because at least one of the basis vectors must have a component in the direction of symmetry. Additionally, it is sensitive to symmetry reversal, because if $\gJ$ is flipped, we get a rotation in the opposite direction. This layer is inspired by Section 3.4 in DiffusionNet \citep{DiffusionNet}, where $\gJ$ is the oriented normals to the surface. However, in the complex linear layer, $\gJ$ is learned.

\begin{figure}
    \centering
    \includegraphics[width=5in]{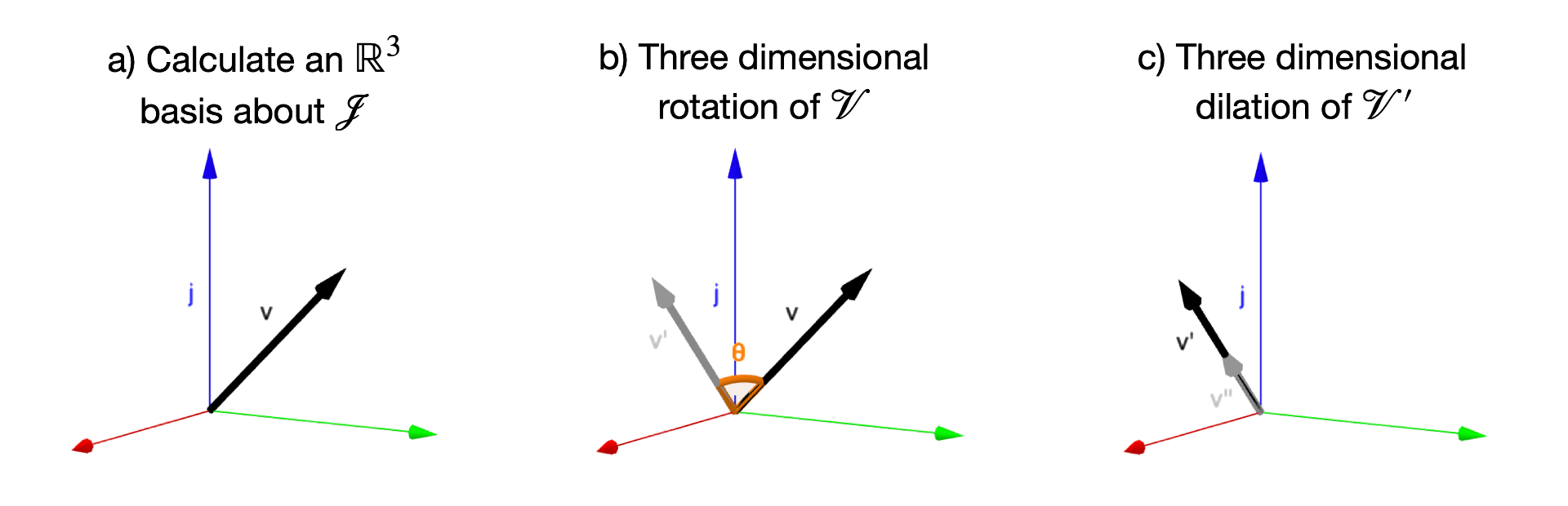}
    \caption{The orientation aware complex linear layer takes in a list of input vectors $\gV$ and a list of direction vectors $\gJ$. It creates an $\mathbb{R}^3$ basis oriented about each $J$. Next, it rotates and dilates $\gV$ within the basis. This layer learns the direction of symmetry because at least one of the basis vectors must have a nonzero component along the direction of symmetry. Additionally, this layer is sensitive to reflections because when $\gJ$ is negated, we get a rotation in the opposite direction.}
    \label{fig:complex_linear_layer}
\end{figure}

\subsection{Attention Mechanism} \label{attn}
We also create an attention mechanism to encourage global information transfer. Consider a left-right symmetric airplane. If we identify the up-down and front-back directions, we can uniquely determine the left-right direction. Since the airplane is approximately symmetric in the up-down direction, only the tail will inform us about this direction. We must propagate this information to all points.

To transfer information we propose an attention mechanism. Attention was first introduced in \citep{attention}, and it defines the relationship between keys, queries, and values \citep{BERT}. The attention mechanism creates a database of key and value pairs, and when given a query, it determines which pairs to attend to. Attention has been considered for equivariant networks on point clouds in works such as \citep{SE(3)-transformer}. However, we consider a different setting where each point has directional information. Then, at each point $\vx$ we look for orthogonal information over all other points to create a local reference frame at $\vx$. The orthogonal frame will be used by the complex linear layer which requires both a direction vector and an orthogonal direction to operate on. Therefore, we design our attention weights to be large when taking in a pair of query and key vectors that are orthogonal. Mathematically, for keys $\gK\in\mathbb{R}^{C\times3}$ and queries $\gQ\in\mathbb{R}^{C\times3}$, our attention weight formula is $\alpha(\gQ,\gK)_{j,k}=\text{softmax}_k\Vert \gQ_{j,:}\times \gK_{k,:}\Vert_2$ where $j\in\{0,1,\ldots,C-1\}$ and $k\in\{0,1,\ldots,C-1\}$. A diagram of the attention mechanism is shown in Figure \ref{fig:attention}.

\begin{figure}
    \centering
    \includegraphics[width=2.5in]{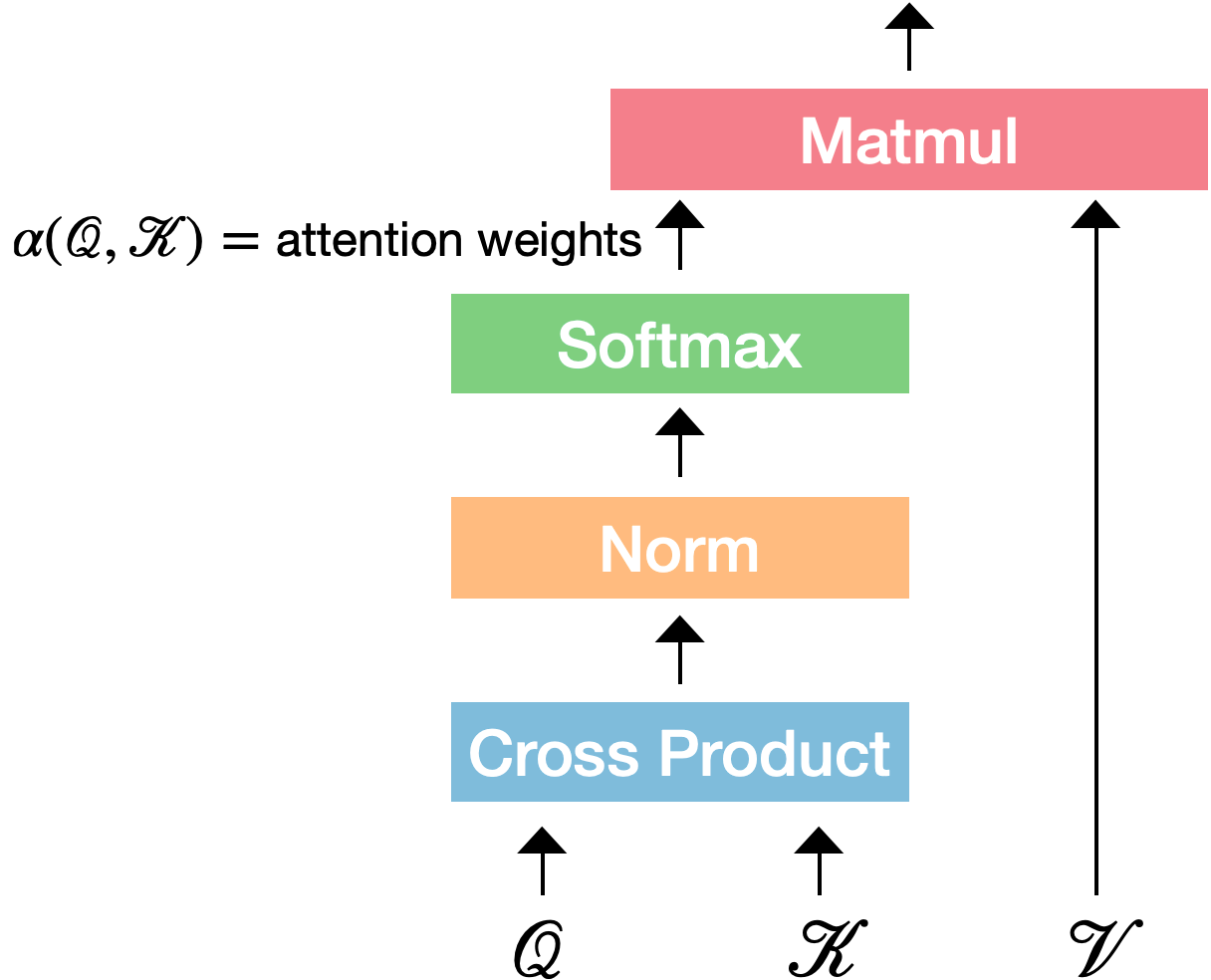}
    \caption{The attention mechanism takes in keys $\gK$, queries $\gQ$, and values $\gV$. The goal of the attention mechanism is for each point to learn an oriented basis corresponding to the entire object. We assume that each point contains directional information. Each point then queries other points to learn orthogonal information. These orthogonal vectors are detected using the cross product operation. The original vectors combined with the orthogonal vectors create a local reference frame at each point.}
    \label{fig:attention}
\end{figure}

\section{Proof of Planar Symmetry Detection Algorithm} \label{proof}

\begin{theorem}
For a point cloud input that is symmetric about exactly one plane, the planar symmetry detection algorithm presented in Algorithm \ref{alg:shell} outputs a vector orthogonal to the input object's plane of symmetry. 
\end{theorem}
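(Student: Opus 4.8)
The plan is to exploit the reflection symmetry directly and track how it acts on every intermediate quantity produced by Algorithm~\ref{alg:shell}. Let $P$ be the unique plane of symmetry, let $\vn$ be a unit normal to $P$, and let $\sigma$ be the reflection across $P$. Write $L$ for the linear part of $\sigma$ (the Householder reflection $L\vu = \vu - 2(\vu\cdot\vn)\vn$); the facts I will use are that $\sigma$ is an isometry with $\sigma(\vx)-\sigma(\vy) = L(\vx-\vy)$, that $L^\top = L$, $L^2 = \mI$, $\det L = -1$, and that the $(-1)$-eigenspace of $L$ is exactly $\mathrm{span}(\vn)$ while the $(+1)$-eigenspace is the plane direction $\vn^{\perp}$. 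So "the output is orthogonal to $P$" is equivalent to "the output lies in the $(-1)$-eigenspace of $L$".

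First I would set up the induced pairing of points. Since the point cloud $\gX$ is $\sigma$-invariant as a set (and assuming, as is generic, that all pairwise distances needed are distinct so the rank-based shells are unambiguous), $\sigma$ permutes the points: there is an involution $i \mapsto i'$ with $\gX_{i'} = \sigma(\gX_i)$, whose fixed points are exactly the points lying on $P$. Because $\sigma$ preserves distances, it preserves the nearest-neighbor ordering from each point, hence maps the $j$-th shell of $\gX_i$ onto the $j$-th shell of $\gX_{i'}$: $\sigma(\gS_{i,j}) = \gS_{i',j}$. Centroids commute with affine maps, so the shell vectors satisfy
\begin{equation}
\vv_{i',j} \;=\; \big(\text{centroid of }\gS_{i',j}\big) - \gX_{i'} \;=\; \sigma\big(\text{centroid of }\gS_{i,j}\big) - \sigma(\gX_i) \;=\; L\,\vv_{i,j}.
\end{equation}
Next, using the identity $(L\va)\times(L\vb) = \det(L)\,L(\va\times\vb) = -L(\va\times\vb)$ valid for any $L\in\mathrm{O}(3)$ with $\det L = -1$, every directed cross product transforms as $\vc_{i',j,k} = -L\,\vc_{i,j,k}$, and since $L$ commutes with averaging, $\vc_{i'} = -L\,\vc_i$.

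Finally I would assemble the global average. Partition $\{0,\dots,C-1\}$ into the orbits of $i\mapsto i'$: two-element pairs $\{i,i'\}$ and singletons (points on $P$). For a singleton, $\vc_i = -L\vc_i$ already puts $\vc_i$ in the $(-1)$-eigenspace. For a pair, the block sum $S = \vc_i + \vc_{i'} = \vc_i - L\vc_i$ satisfies $-LS = -L\vc_i + L^2\vc_i = \vc_i - L\vc_i = S$, so $S$ also lies in the $(-1)$-eigenspace. Thus $\sum_i \vc_i$ is a sum of vectors each in $\mathrm{span}(\vn)$, hence so is the normalized output $\vc$, which is exactly the claim that $\vc$ is orthogonal to the plane of symmetry. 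The main obstacle — really the only delicate point — is the step asserting $\sigma(\gS_{i,j}) = \gS_{i',j}$: it needs the equal-cardinality, rank-based shell partition to be invariantly defined, which requires ruling out distance ties at shell boundaries. I would handle this with a genericity assumption (stated explicitly), and I would also remark that the hypothesis of \emph{exactly one} plane of symmetry is what makes the conclusion meaningful — with additional symmetry planes the same argument would force $\vc$ into the intersection of several normal lines and hence $\vc = \vzero$.
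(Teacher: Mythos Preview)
Your argument is correct and follows the same underlying idea as the paper's proof: pair each point with its mirror image, show that shell vectors of paired points are related by the reflection, deduce that the corresponding cross products are related by $-L$, and conclude that the pairwise contributions to the global average survive only in the normal direction. The paper carries this out in coordinates (taking the symmetry plane to be the $yz$-plane and checking by hand that the $y$- and $z$-components of $\vc_{i,j,k}+\vc_{i',j,k}$ cancel), whereas you work coordinate-free via the eigenspace decomposition of $L$ and the identity $(L\va)\times(L\vb)=-L(\va\times\vb)$. Your version is a bit more careful: you treat points on $P$ (singleton orbits) explicitly, you flag the genericity assumption needed for the rank-based shells to be well defined under distance ties, and you note why the ``exactly one plane'' hypothesis is needed to avoid the degenerate $\vc=\vzero$ conclusion --- points the paper either leaves implicit or discusses only later in the ablation study.
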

\begin{proof}
We will call the input point cloud $\gX$. Without loss of generality, let $\gX$ be symmetric about the $yz$-plane. We will show that the planar symmetry detection algorithm outputs a vector with only an $x$-component. For any point $\vx$ in the point cloud, there exists a point $\vx'$ which is symmetric to $\vx$ about the $yz$-plane. Let $\vs_1=(x_1,y_1,z_1),\ldots,\vs_n=(x_n,y_n,z_n)$ be the $n$ shell vectors of $\vx$. Since $\vx'$ is the reflection of $\vx$ about the $yz$-plane, the shell vectors of $\vx'$ will be the reflection of the shell vectors of $\vx$ about the $yz$-plane. Therefore the shell vectors of $\vx'$ are $
\vs_1'=(-x_1,y_1,z_1),\ldots,\vs_n'=(-x_n,y_n,z_n)$. The directed cross products of the shell vectors of $\vx$ are of the form $(y_jz_k-y_kz_j,x_kz_j-x_jz_k,x_jy_k-x_ky_j)$ for all $j\in\{0,\ldots,n-1\}$ and $k\in\{0,\ldots,n-1\}$ where $k>j$. The directed cross products of the shell vectors of $x'$ are of the form $(y_jz_k-y_kz_j,-(x_kz_j-x_jz_k),-(x_jy_k-x_ky_j))$. For any fixed $(j,k)$, the sum of the directed cross product for $\vx$ and $\vx'$ will only have an $x$-component. Since the vector returned by the planar symmetry detection algorithm is the average of all directed cross products across all points, the output vector of the algorithm will also only have an $x$-component. Thus the output vector will be orthogonal to the input object's plane of symmetry. 
\end{proof}

\begin{figure}
    \centering
    \includegraphics[width=5in]{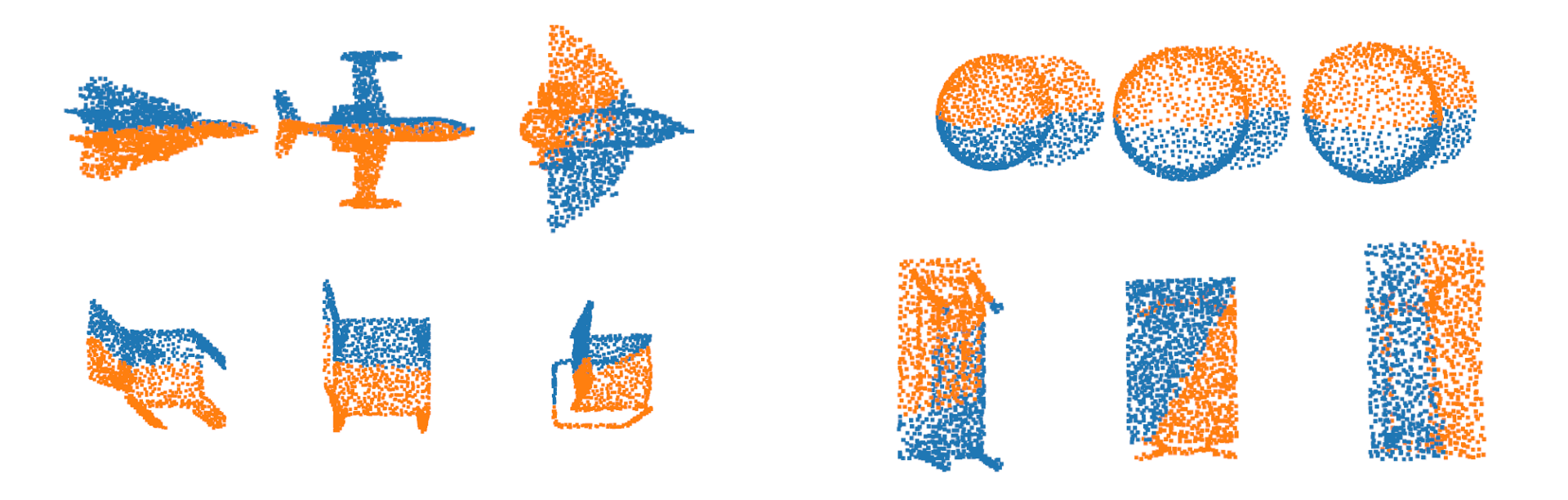}
    \caption{We conduct a left-right segmentation experiment on four different classes of objects from the Shapenet dataset \citep{Shapenet} --- airplanes (top-left), caps (top-right), chairs (bottom-left), and tables (bottom-right). These experiments only use the output of the planar symmetry detection algorithm. The algorithm returns a vector that represents the direction of symmetry. The plane which passes through the origin and is normal to the vector defines a classifier. The algorithm successfully segments objects with exactly one plane of symmetry, such as airplanes, caps, and chairs.}
    \label{fig:shell_results}
\end{figure}

\section{Ablation Study} \label{ablation}
We further analyze OAVNN by conducting an ablation study. In particular, we investigate the two methods for learning the direction of symmetry: the symmetry detecting features and the orientation aware complex linear layer. First, we consider running a left-right segmentation experiment using only the symmetry detecting features. In particular, we run the planar symmetry detection algorithm on various left-right symmetric point clouds centered at the origin. We calculate the plane, which is centered at the origin and perpendicular to the average cross vector. We use this plane to divide each object into a left and right half-space. The results for various object classes are in Figure \ref{fig:shell_results}. On airplane objects, we obtain about 85\% accuracy. Note that out of the four example classes shown in Figure \ref{fig:shell_results}, the table class does noticeably worse. This highlights one of the weaknesses of the planar symmetry detection algorithm. This algorithm will only work for objects with exactly one plane of symmetry. For objects like tables with two or more planes of symmetry, the algorithm outputs the zero vector. The algorithm also has suboptimal results for objects that are approximately symmetric about two planes (e.g. airplanes) or for point clouds with sampling or density issues.

\begin{figure}
    \centering
    \includegraphics[width=6in]{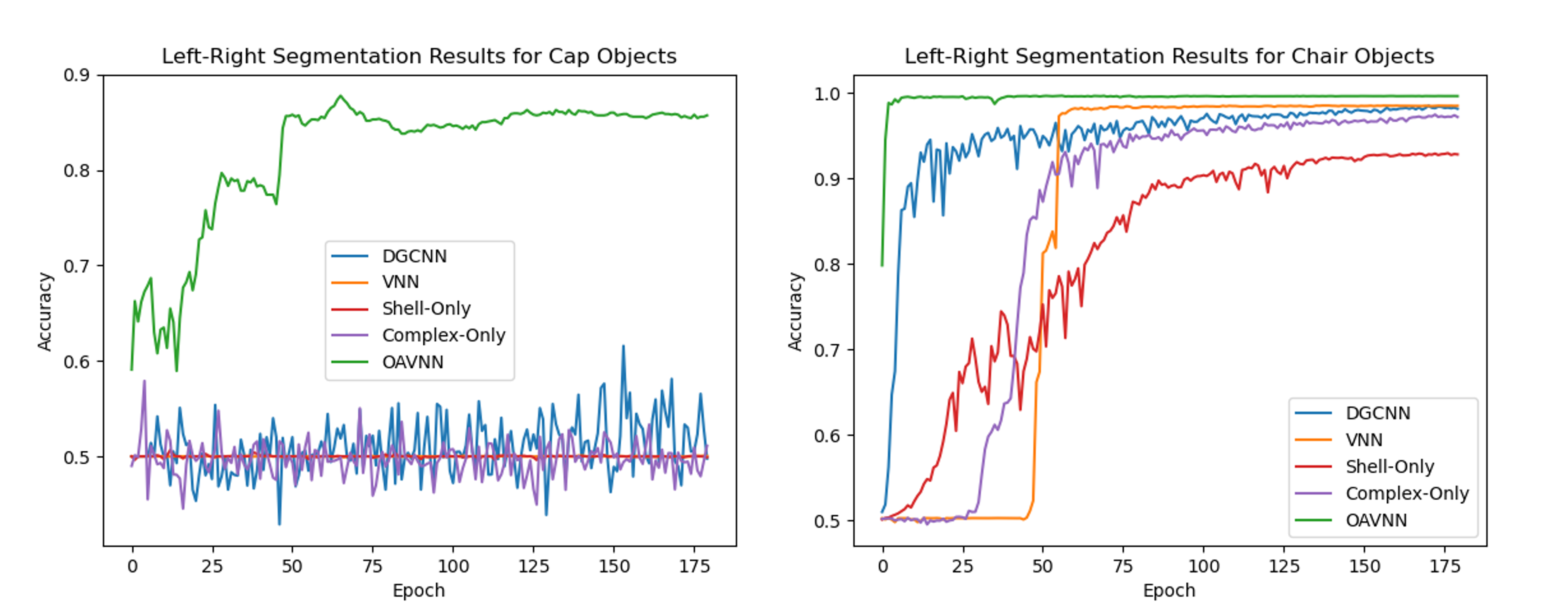}
    \caption{Results for the left-right segmentation experiment run on cap and chair objects from the Shapenet dataset \citep{Shapenet}. The solid line shows testing accuracies for five models: 1) DGCNN, a conventional segmentation network which is not  $\mathrm{O}(3)$ equivariant (blue), 2) VNN an $\mathrm{O}(3)$ equivariant network that is ambiguous to symmetries (orange), 3) Shell-Only network which consists of only the symmetry-detecting features and the attention mechanism (red), 4) Complex-Only network which consists of only the orientation aware complex linear layer and the attention mechanism (purple), and 5) OAVNN which consists of the symmetry-detecting features, the orientation aware complex linear layer, and the attention mechanism (green). We see that of all the models, OAVNN obtains accurate results the quickest.}
    \label{fig:cap_chair}
\end{figure}

Next, we create the Shell-Only network with only the symmetry detecting features and the attention mechanism. In particular, this network does not use the orientation aware complex linear layer. This network is similar to the VNN, however, it takes in two additional features: the symmetry detection features and the output of the attention mechanism. The results for this model are in red in Figure \ref{fig:ablation}.

We also create the Complex-Only network with only the orientation aware complex linear layer and the attention mechanism. This network has the same architecture as the OAVNN, except the symmetry detecting features are not added. In other words, this network does not use the planar symmetry detection algorithm. The results of this network are in purple in Figure \ref{fig:ablation}. 

For airplane objects, only the VNN cannot complete the segmentation task. Therefore any network which can detect the direction of symmetry can complete the left-right segmentation task for objects with one plane of symmetry. The DGCNN network takes roughly about 190 epochs to learn an accurate segmentation, and the Complex-Only network takes about 150 epochs. Surprisingly, the Complex-Only network (which is rotation equivariant) only does slightly better than the DGCNN (which is not rotation equivariant). The Shell-Only network takes about 75 epochs to learn an accurate segmentation.

The OAVNN model learns an accurate segmentation faster than all other models. This shows that when the symmetry detecting features, the complex linear layer, and the attention mechanism are combined, the model can complete the segmentation task the fastest. Additionally, as shown in Table \ref{tab:results}, after training for 200 epochs the OAVNN model obtains the best final testing accuracy. Thus, both the planar symmetry detection algorithm and the complex linear layer are required to robustly learn the direction of symmetry and complete the symmetric segmentation task.

In Figure \ref{fig:cap_chair}, we show results of the left-right segmentation experiment run on cap and chair objects. In both of these cases we again see that the OAVNN model is able to obtain an accurate segmentation the fastest. However, we do see two interesting behaviors. 

First off, only the OAVNN successfully segments the cap objects. All other models cannot segment the cap objects even after 200 epochs of training. We expect all networks that are not ambiguous to symmetries, or all networks other than the VNN, to eventually complete the task. Unlike airplane and chair objects, the models may have a harder time learning how to segment cap objects because there are fewer cap objects in the dataset.

\begin{table}[]
    \centering
    \begin{tabular}{c|c|c|c}
         & \textbf{Airplane} & \textbf{Cap} & \textbf{Chair} \\
         \hline
         \hline
       \textbf{DGCNN}  & 0.901 & 0.496 & 0.981\\
       \hline
       \textbf{VNN}  & 0.501 & 0.500 & 0.985\\
       \hline
       \textbf{Shell-Only}  & 0.889 & 0.499 & 0.928\\
       \hline
       \textbf{Complex-Only}  & 0.876 & 0.511 & 0.972\\
       \hline
       \textbf{OAVNN}  & \textbf{0.905} & \textbf{0.857} & \textbf{0.996}\\
    \end{tabular}
    \caption{Results for the left-right segmentation experiment on airplane, cap, and chair objects from the Shapenet dataset \citep{Shapenet}. The table shows the average testing accuracies across three runs of five different models each trained for 200 epochs. The OAVNN model obtains the best final testing accuracy.}
    \label{tab:results}
\end{table}

Second, the VNN model accurately segments chair objects after about 50 epochs of training. We would expect the VNN to be unable to complete the task because the network is ambiguous to symmetric objects like chairs. In these experiments, the VNN's behavior could be because the chair objects are not truly symmetric due to sampling issues. Additionally, it is likely that this behavior is more clearly seen with chair objects because airplane objects are symmetric about the left-right plane and approximately symmetric about the up-down plane. On the other hand, chairs are only symmetric about the left-right plane and are not approximately symmetric about any other plane. Therefore, the task of segmenting a chair is most likely easier than the task of segmenting an airplane.
\end{document}